\documentclass[11 pt]{article}
\usepackage[utf8]{inputenc}
\usepackage{url}
\usepackage{amsmath}
\usepackage{authblk}
\usepackage{amssymb}
\usepackage{hyperref}
\usepackage{graphics}
\usepackage{mathtools}
\usepackage{caption}
\usepackage{subcaption}

\usepackage{graphicx}
\usepackage{algpseudocode}
\usepackage{algorithm}
\usepackage{afterpage}
\usepackage{array}
\usepackage{collcell}
\usepackage{tcolorbox}
\usepackage{multirow}
\usepackage{amsthm}

\newtheorem{theorem}{Theorem}[section]
\newtheorem{lemma}[theorem]{Lemma}

\usepackage{cite}
\usepackage{booktabs}
\usepackage{appendix}

\usepackage[left=1in,right=1in,top=1.2in,bottom=1.2in,
            footskip=.25in]{geometry}

\usepackage{algpseudocode}
\usepackage{algorithm}
\def\x{{\mathbf x}}
\def\w{{\boldsymbol{\omega}}}

\def\f{{\boldsymbol f}}

\def\y{{\mathbf y}}
\def\e{{\boldsymbol \epsilon}}

\def\R{{\mathbb R}}

\title{Generalization Bound for Diffusion Models using Random Features}
\author[1]{Esha Saha}
\author[1]{Giang Tran}

\affil[1]{Department of Applied Mathematics, University of Waterloo}

\date{}
\begin{document}
\maketitle
\begin{abstract}
Diffusion probabilistic models have been successfully used to generate data from noise. However, most diffusion models are computationally expensive and difficult to interpret with a lack of theoretical justification. Random feature models on the other hand have gained popularity due to their interpretability but their application to complex machine learning tasks remains limited.  In this work,  we present a diffusion model-inspired deep random feature model that is interpretable and gives comparable numerical results to a fully connected neural network having the same number of trainable parameters. Specifically, we extend existing results for random features and derive generalization bounds between the distribution of sampled data and the true distribution using properties of score matching. We validate our findings by generating samples on the fashion MNIST dataset and instrumental audio data.
\end{abstract}

\section{Introduction}
Generative modeling has been successfully used to generate a wide variety of data. Some well-known models are Generative Adversarial Networks \cite{goodfellow2014generative, dinh2017density,gan4}, flow-based models \cite{ho2019flow++,li2023fast,zhang2021diffusion}, autoregressive models \cite{menick2018generating,hoogeboom2021autoregressive,kingma2021variational}, and variational autoencoders \cite{gan4,kingma2019introduction,pu2016variational,xu2017variational}. 
Remarkable results can also be obtained using energy-based modeling and score matching \cite{song2020improved,lm2,sde1}. 
Diffusion models are one such class of generative models that give exemplary performance in terms of data generation. A diffusion probabilistic model (or ``diffusion model") is a parameterized model that is trained using variational inference to generate samples matching the data from input distribution after a finite number of timesteps. The model learns to reverse a diffusion process, which is a fixed Markov chain adding noise to the input data until it is destroyed. If the (Gaussian) noise added is very small, then the transition kernels are also conditional Gaussian distribution which leads to a neural network parameterization of the mean and the variance of the transition kernels. Most of the existing diffusion models are extremely complex and very computationally expensive.  In this paper, we propose a model architecture to bridge the gap between interpretable models and diffusion models. Our main idea is to build a deep random feature model inspired by semi-random features \cite{kawaguchi2018deep} and diffusion models as formulated in \cite{ddpm}.

Our paper is organized as follows. We give a brief background of diffusion models and random features in Section \ref{sec:background} along with the related works. The model architecture along with the algorithm and its theoretical results are given in Section \ref{sec:model}. All experimental results are summarized in Section \ref{sec:results} followed by Section \ref{sec:conc} where we conclude our paper and discuss future directions.

\subsection{Contributions}

We propose a diffusion model-inspired random feature model, that uses semi-random-like features to learn the reverse process in the diffusion models. Our main contributions are given below:
\begin{itemize}
    \item Our proposed model is the first of its kind to combine the idea of random features with generative models. It acts as a bridge between the theoretical and generative aspects of the diffusion model by providing approximation bounds of samples generated by diffusion model-based training algorithms. We show that for each fixed timestep, our architecture can be reduced to a random feature model preserving the properties of interpretability. 
    \item Our numerical results on the fashion MNIST and the audio data validate our findings by generating samples from noise, as well as denoising signals not present in the training data. We show that even with a very small training set, our proposed model can denoise data and generate samples similar to training data.
\end{itemize}

\section{Background and related works}
\label{sec:background}

 We first recall some useful notations and terminologies corresponding to diffusion models. In this paper, we denote $\mathcal{N}(\mathbf{0},\mathbf{I})$ the $d-$ dimensional Gaussian distribution with the zero mean vector and the identity covariance matrix. We say that a function 
$p(\x) = \mathcal{N}(\boldsymbol{\mu},\boldsymbol{\Sigma})$ to mean that $p(\x)$ is the p.d.f. of a random vector $\x$ following the multivariate normal distribution with mean vector $\boldsymbol{\mu}$ and covariance matrix $\boldsymbol{\Sigma}$.

A diffusion model consists of two Markov chains: a forward process and a reverse process. The goal of the forward process is to degrade the input sample by gradually adding noise to the data over a fixed number of timesteps. The reverse process involves learning to undo the added-noise steps using a parameterized model. Knowledge of the reverse process helps to generate new data starting with a random noisy vector followed by sampling through the reverse Markov chain \cite{koller2009probabilistic,yang2022diffusion}. 

 \subsection{Forward Process }
The forward process degrades the input data such that $q(\x_K)\approx \mathcal{N}(\mathbf{0},\mathbf{I})$. More precisely, let $\mathbf{x}_0\in\R^d$ be an input from an unknown distribution with p.d.f. $q(\x_0)$. Given a variance schedule $0<\beta_1<\beta_2<...<\beta_K<1$, the forward process to obtain a degraded sample for a given timestep is defined as:
\begin{equation}\label{eqn:forward}
    \x_{k+1} = \sqrt{1-\beta_{k+1}}\x_{k} + \sqrt{\beta_{k+1}}\mathbf{\e}_{k},\,\,\,\,\, 
    \text{where}\,\, \e_{k}\sim\mathcal{N}(\mathbf{0},\mathbf{I})\,\,\text{and}\,\,k=0,\hdots,K-1.
\end{equation}
The forward process generates a sequence of random variables $\x_1,\x_2,...,\x_K$ with conditional distributions 
\begin{equation}\label{eqn:forward_xk}
    q(\x_{k+1}|\x_{k})= \mathcal{N}(\x_{k+1};\sqrt{1-\beta_{k+1}}\x_{k},\beta_{k+1} \mathbf{I}).
    \end{equation}
Let $\alpha_{k} = 1-\beta_{k}$ for $k=1,\ldots, K$ and $\overline\alpha_{k} = \prod\limits_{i=1}^{k}\alpha_i$. 
Using the reparameterization trick, we can obtain $\x_{k+1}$ at any given time $k\in \{0,\ldots, K-1\}$ from $\x_0$:
\begin{equation}
\begin{aligned}
    \x_{k+1} &= \sqrt{\alpha_{k+1}}\,\,\x_{k} + \sqrt{1- \alpha_{k+1}}\,\,\mathbf{\e}_{k}\\
  &=\sqrt{\alpha_{k+1}}\left(\sqrt{\alpha_{k}}\x_{k-1} + \sqrt{1 - \alpha_{k}}\,\mathbf{\e}_{k-1}\right) + \sqrt{1- \alpha_{k+1}}\mathbf{\e}_{k}\\
  &=\sqrt{\alpha_{k+1}\alpha_{k}}\,\,\x_{k-1} + \sqrt{1-\alpha_{k+1}\alpha_{k}}\,\,\widetilde{\e}_{k-1}\\
  & \vdots\\  
    & = \sqrt{\prod\limits_{i=1}^{k+1}\alpha_i}\,\, \x_0 + \sqrt{1-\prod\limits_{i=1}^{k+1}\alpha_i}\,\,\widetilde{\e}_0,
    \end{aligned}
\end{equation}
where $\widetilde{\e}_{i}\sim \mathcal{N}(\mathbf{0},\mathbf{I})$ for $i=0,\ldots,k-1$.
Therefore, the conditional distribution $q(\x_{k+1}|\x_{0})$ is 
\begin{equation}\label{eqn:forward_x0}
q(\x_{k+1}|\x_{0}) = \mathcal{N}(\x_{k+1};\sqrt{\overline\alpha_{k+1}}\,\x_0,(1-\overline\alpha_{k+1})\mathbf{I}),
\end{equation}
At $k=K$, we have
\[\x_K = \sqrt{\overline\alpha_K}\,\x_0 + \sqrt{1-\overline\alpha_K}\,\e,\]
where $\e\sim\mathcal{N}(\mathbf{0},\mathbf{I})$.
Since $0<\beta_1<...<\beta_K<1$,  $0<\overline\alpha_K <\alpha_1^K<1$. Therefore, 
$\lim\limits_{K\rightarrow\infty}\overline\alpha_K=0$. Hence, $q(\x_K) = \int q(\x_K|\x_0)q(\x_0)d\x_0\approx\mathcal{N}(\mathbf{0},\mathbf{I})$, i.e., as the number of timesteps becomes very large, the distribution $q(\x_{K})$ will approach the Gaussian distribution with mean $\mathbf{0}$ and covariance $\mathbf{I}$.

\subsection{Reverse Process }

The reverse process aims to generate data from the input distribution by sampling from $q(\x_K)$ and gradually denoising for which one needs to know the reverse distribution $q(\x_{k-1}|\x_{k})$. In general, computation of $q(\x_{k-1}|\x_{k})$ is intractable without the knowledge of $\x_0$. Therefore, we condition the reverse distribution on $\x_0$ in order to obtain the mean and variance for the reverse process. More precisely, using Baye's rule, we have:
\begin{equation*}
q(\x_{k-1}|\x_k,\x_0) = q(\x_k|\x_{k-1},\x_0)\, \dfrac{q(\x_{k-1},\x_0)}{q(\x_k,\x_0)}\, \dfrac{q(\x_0)}{q(\x_0)} = q(\x_k|\x_{k-1},\x_0)\,\dfrac{q(\x_{k-1}|\x_0)}{q(\x_k|\x_0)}.\end{equation*}
Using Equation \eqref{eqn:forward_xk}, Equation \eqref{eqn:forward_x0}, we have
\begin{multline*}
q(\x_{k-1}|\x_k,\x_0) =\dfrac{1}{\sqrt{(2\pi\beta_k)^d}}\exp\left(-\dfrac{1}{2}\dfrac{(\x_k - \sqrt{\alpha_{k}}\x_{k-1})^T(\x_k - \sqrt{\alpha_{k}}\x_{k-1})}{\beta_k}\right)\\
\cdot 
\dfrac{1}{\sqrt{(2\pi(1-\overline{\alpha}_{k-1}))^d}}\exp\left(-\dfrac{1}{2}\dfrac{(\x_{k-1} - \sqrt{\overline{\alpha}_{k-1}}\x_{0})^T(\x_{k-1} - \sqrt{\overline{\alpha}_{k-1}}\x_{0})}{1-\overline{\alpha}_{k-1}}\right)\\
\cdot\left(\sqrt{(2\pi(1-\overline{\alpha}_{k}))^d}\right)\exp\left(\dfrac{1}{2}\dfrac{(\x_{k} - \sqrt{\overline{\alpha}_{k}}\x_{0})^T(\x_{k} - \sqrt{\overline{\alpha}_{k}}\x_{0})}{1-\overline{\alpha}_{k}}\right)\\
= \dfrac{\sqrt{(1-\overline{\alpha}_{k})^d}}{\sqrt{(2\pi\beta_k(1-\overline{\alpha}_{k-1}))^d}}\exp\Bigg\{-\dfrac{1}{2}\Bigg[ \dfrac{\x_k^T\x_k-2\sqrt{\alpha_{k}}\x_k^T\x_{k-1}+\alpha_{k}\x_{k-1}^T\x_{k-1}}{\beta_k}\\
+ \dfrac{\x_{k-1}^T\x_{k-1} - 2\sqrt{\overline{\alpha}_{k-1}}\x_{0}^T\x_{k-1}+ \overline{\alpha}_{k-1}\x_{0}^T\x_0}{1-\overline{\alpha}_{k-1}}
-\dfrac{\x_{k}^T\x_k - 2\sqrt{\overline{\alpha}_{k}}\x_k^T\x_{0}+ \overline{\alpha}_k\x_0^T\x_0}{1-\overline{\alpha}_{k}}\Bigg]\Bigg\}\\
=  \dfrac{\sqrt{(1-\overline{\alpha}_{k})^d}}{\sqrt{(2\pi\beta_k(1-\overline{\alpha}_{k-1}))^d}}\exp\Bigg\{-\dfrac{1}{2}\dfrac{1-\overline{\alpha}_k}{\beta_k (1-\overline\alpha_{k-1})} \x_{k-1}^T \x_{k-1} +  \bigg(\dfrac{\sqrt{\alpha_k}}{\beta_k}\x_k^T + \dfrac{\sqrt{\overline\alpha_{k-1}}}{1- \overline\alpha_{k-1}}\x_0^T) \x_{k-1} + \text{terms}(\x_k,\x_0)\Bigg\}
\end{multline*}
\begin{equation}
= \dfrac{\sqrt{(1-\overline{\alpha}_{k})^d}}{\sqrt{(2\pi\beta_k(1-\overline{\alpha}_{k-1}))^d}} \exp\left(-\frac{1}{2}\dfrac{(\x_{k-1} - \Tilde{\boldsymbol{\mu}}_k)^T(\x_{k-1} - \Tilde{\boldsymbol{\mu}}_k)}{\Tilde{\beta}_k}\right)
\end{equation}

Thus we see that the probability density function of the reverse distribution conditioned on $\x_0$ is also a Gaussian distribution with mean vector $\Tilde{\boldsymbol{\mu}}_k$ and covariance matrix $\Tilde{\beta}_k\mathbf{I}$ which are given by,
 \begin{equation}\label{eq:mu}
\Tilde{\boldsymbol{\mu}}_k = \dfrac{\sqrt{\alpha_k}(1-\overline{\alpha}_{k-1})}{1-\overline{\alpha}_k}\x_k + \dfrac{\sqrt{\overline{\alpha}_{k-1}}\beta_k}{1-\overline{\alpha}_k}\x_0\,\,\text{and}\,\,
     \Tilde{\beta}_k = \dfrac{1-\overline{\alpha}_{k-1}}{1-\overline{\alpha}_k}\beta_k
\end{equation}
Thus the reverse distribution conditioned on $\x_0$ is $q(\x_{k-1}|\x_{k},\x_0) = \mathcal{N}(\Tilde{\boldsymbol{\mu}}_k,\Tilde{\beta}_k\mathbf{I})$, where $\Tilde{\boldsymbol{\mu}}_k, \Tilde{\beta}_k$ are obtained above. Our aim is to learn the reverse distribution from the obtained conditional reverse distribution. From Markovian theory, if $\beta_{k}$'s are small, the reverse process is also Gaussian \cite{sohl2015deep}. 
Suppose $p_{\theta}(\x_{k-1}|\x_{k})$ be the learned reverse distribution, then Markovian theory tells us that $p_{\theta}(\x_{k-1}|\x_{k}) = \mathcal{N}(\boldsymbol{\mu}_{\theta}(\x_k,k),\boldsymbol{\Sigma}_{\theta}(\x_{k},k))$, where $\boldsymbol{\mu}_{\theta}(\x_k,k)$ and $\boldsymbol{\Sigma}_{\theta}(\x_{k},k)$ are the learned mean vector and variance matrix respectively. 
Since the derived covariance matrix $\Tilde{\beta}_k\mathbf{I}$ for conditional reverse distribution is constant, $\boldsymbol{\Sigma}_{\theta}(\x_{k},k)$ need not be learnt. In \cite{ddpm}, the authors show that choosing $\boldsymbol{\Sigma}_{\theta}(\x_{k},t)$ as $\beta_{k} \mathbf{I}$ or $\Tilde{\beta}_{k} \mathbf{I}$ yield similar results and thus we fix $\boldsymbol{\Sigma}_{\theta}(\x_{k},k) = \beta_{k} \mathbf{I}$ for simplicity. Furthermore, since $\x_k$ is also available as input to the model, the loss function derived in \cite{ddpm} as a KL divergence between $q(\x_{k-1}|\x_k,\x_0)$ and $p_{\theta}(\x_{k-1}|\x_{k})$ can be simplified as
\begin{equation}\label{eq:kldiv}
    D_{KL}(q(\x_{k-1}|\x_{k},\x_0)\| p_{\theta}(\x_{k-1}|\x_{k})= \mathbb{E}_q\left[\frac{1}{2\beta_k^2}\|\Tilde{\boldsymbol{\mu}}_k(\x_{k},\x_0) - \boldsymbol{\mu}_{\theta}(\x_{k},k)\|^2 \right] +\text{const}
\end{equation}
where for each timestep $k$, $\boldsymbol{\Tilde{\mu}}_k(\x_k,\x_0)$ denotes the mean of the reverse distribution conditioned on $\x_0$ i.e., $q(\x_{k-1}|\x_k,\x_0)$ and $\boldsymbol{\mu}_{\theta}(\x_k,k)$ denotes the learnt mean vector.
Thus, the above equation predicts the mean of the reverse distribution when conditioned on $\x_0$. Substituting $\x_0= \dfrac{1}{\sqrt{\overline{\alpha}_k}}(\x_k - \sqrt{1-\overline{\alpha}_k}\boldsymbol{\epsilon}_k)$ in Eq. \eqref{eq:mu} we can obtain $\Tilde{\boldsymbol{\mu}}(\x_k,k) = \dfrac{1}{\sqrt{\alpha_k}}\left(\x_k - \dfrac{\beta_k}{\sqrt{1-\overline{\alpha}_k}}\e\right)$. Further, since $\x_k$ is known, we can use the formula for $\boldsymbol{\mu}_{\theta}(\x_k,k) = \dfrac{1}{\sqrt{\alpha_k}}\left(\x_k - \dfrac{\beta_k}{\sqrt{1-\overline{\alpha}_k}}\e_{\theta}(\x_k,k)\right)$. 
We can simplify Eq. \eqref{eq:kldiv} as:

\begin{equation}\label{eq:mainloss}
   \mathbb{E}_{k,\x_0,\boldsymbol{\epsilon}}\left[\dfrac{1}{2\alpha_k(1-\overline{\alpha}_k)}\|\e - \e_{\theta}(\x_k,k)\|^2\right].
\end{equation}
where $\e_{\theta}$ now denotes a function approximator intended to predict the noise from $\x_k$. 
The above results show that we can either train the reverse process mean function approximator $\boldsymbol{\mu}_{\theta}$ to predict $\Tilde{\boldsymbol{\mu}}_k$ or modify using its parameterization to predict $\e$. In our proposed algorithm, we choose to use the loss function from Eq. \eqref{eq:mainloss} since it is one of the simplest forms to train and understand. This formulation of DDPM also helps us to harness the power of SDEs in diffusion models through its connection to DSMs \cite{block2020generative} .

\subsection{DDPM and DSM}
We can also apply the DDPM algorithm for score matching by formulating the DDPM objective as a DSM objective.
\begin{align}\label{eq:ddpmtodsm}
    L_{\text{DDPM}} &= \mathbb{E}_{k,\x_0,\boldsymbol{\epsilon}}\left[\dfrac{1}{2\alpha_k(1-\overline{\alpha}_k)}\|\e - \e_{\theta}(\x_k,k)\|^2\right]\\
    &=\mathbb{E}_{k,\x_0,\boldsymbol{\epsilon}}\left[\dfrac{1}{2\alpha_k(1-\overline{\alpha}_k)}\Bigg\|\dfrac{\x_k - \sqrt{\overline{\alpha}_k}\x_0}{\sqrt{1-\overline{\alpha}_k}} - \e_{\theta}(\x_k,k)\Bigg\|^2\right]\\
    & = \mathbb{E}_{k,\x_0,\x_k}\left[\dfrac{1}{2\alpha_k(1-\overline{\alpha}_k)}\Bigg\|\dfrac{\x_k - \sqrt{\overline{\alpha}_k}\x_0}{1-\overline{\alpha}_k}\sqrt{1-\overline{\alpha}_k} -\dfrac{\sqrt{1-\overline{\alpha}_k}}{\sqrt{1-\overline{\alpha}_k}} \e_{\theta}(\x_k,k)\Bigg\|^2\right]\\
    & = \mathbb{E}_{k,\x_0,\x_k}\left[\dfrac{1}{2\alpha_k}\Bigg\|-\nabla_{\x_k}\log q(\x_k|\x_0) -\dfrac{1}{\sqrt{1-\overline{\alpha}_k}} \e_{\theta}(\x_k,k)\Bigg\|^2\right]\\
    &=\mathbb{E}_{k,\x_0,\x_k}\left[\dfrac{1}{2\alpha_k}\Bigg\|s_{\theta}(\x_k,k) - \nabla_{\x_k}\log q(\x_k|\x_0)\Bigg\|^2\right] = L_{\text{DSM}}
\end{align}
where $s_{\theta}(\x_k,k) =  -\dfrac{1}{\sqrt{1-\overline{\alpha}_k}} \e_{\theta}(\x_k,k)$. The above formulation is known as  denoising score matching (DSM), which is equivalent to the objective of DDPM. Furthermore, the objective of DSM is also related to the objective of score based generative models using SDEs \cite{song2019generative}. We briefly discuss the connection between diffusion models, SDEs and DSM in the upcoming section \cite{song2019generative,yang2022diffusion,ddpm}.

\subsection{Diffusion Models and SDEs}
The forward process can also be generalized to stochastic differential equations (SDEs) if infinite time steps or noise levels are considered (SDEs) as proposed in \cite{yang2022diffusion}. To formulate the forward process as an SDE, let $t=\frac{k}{K}$ and define functions $\x(t),\beta(t)$ and $\e(t)$ such that $\x(\frac{k}{K}) = \x_k$, $\beta(\frac{k}{K}) = K\beta_k$ and $\e(\frac{k}{K}) = \e_k$. Note that in the limit $K\rightarrow\infty$, we get $t\in [0,1]$. 

Using the derivations from \cite{yang2022diffusion}, the forward process can be written as a SDE of the form,
\begin{align}\label{eq:finalsde}
   d\x &=\dfrac{-\beta(t)}{2}\x dt + \sqrt{\beta(t)}d\mathbf{w}
\end{align}
where $\mathbf{w}$ is the standard Wiener process. 
The above equation now is in the form of an SDE
\begin{equation}
    d\x = f(\x,t)dt + g(t)d\mathbf{w}
\end{equation}
where $f(\x,t)$ and $g(t)$ are diffusion and drift functions of the SDE respectively, and $\mathbf{w}$ is a standard Wiener process. The above processed can be reversed by solving the reverse-SDE,
\begin{equation}
    d\x = [f(\x,t)-g(t)^2\nabla_{\x(t)}\log q(\x(t))]dt + g(t)d\mathbf{\overline{w}}
\end{equation}
where $\overline{\mathbf{w}}$ is a standard Wiener process backwards in time, and $dt$ denotes an infinitesimal negative time step and $q(\x(t))$ is the marginal distribution of $\x(t)$. Note that in particular for Eq.\eqref{eq:finalsde}, the reverse SDE will be of the form
\begin{equation}
    d\x = \left[\dfrac{\beta(t)}{2} - \beta(t)\nabla_{\x(t)}\log q(\x(t))\right]dt + \sqrt{\beta(t)}d\mathbf{\overline{w}}
\end{equation}

The unknown term $\nabla_{\x(t)}\log q(\x(t))$ is called the score function and is estimated by training a parameterized model $s_\theta(\x(t),t)$ via minimization of the loss given by 
\begin{equation}\label{eq:loss_or}
\mathbb{E}_{q(\x(t))}\left[\frac{1}{2}\Big\|s_{\theta}(\x(t),t)- \nabla_{\x(t)}\log q(\x(t)) \Big\|_2^2\right]
\end{equation}

Note that since $q(\x(0))$ is unknown, therefore the distribution $q(\x(t))$ and subsequently the score function $\nabla_{\x(t)}\log q(\x(t))$ are also unknown. Referring to results from \cite{dsm3}, we see that the loss in Eq. \eqref{eq:loss_or} is equivalent to the denoising score matching (DSM) objective given by,
\begin{equation}\label{eq:sgm1}
\mathbb{E}_{q(\x(t),\x(0))}\left[\frac{1}{2}\Big\|s_{\theta}(\x(t),t)- \nabla_{\x(t)}\log q(\x(t)|\x(0)) \Big\|_2^2\right].
\end{equation}

We can see that the above objective is the same as the objective of DSM in the discrete setting. Apart from the success of score based models using SDEs, an additional advantage of formulating the diffusion model using SDEs is the theoretical analysis based on results from SDEs. In our paper, we aim to use this connection to build a theoretical understanding of our proposed model.

While there are remarkable results for improving training and sampling for diffusion models, little has been explored in terms of the model architectures. Since distribution learning and data generation is a complex task, it is unsurprising that conventional diffusion models are computationally expensive. From previous works in \cite{ddpm,yang2022diffusion,zhang2021diffusion}, U-Net (or variations on U-Net combined with ResNet, CNNs, etc.) architecture is still the most commonly used model for diffusion models. U-Nets not only preserve the dimension of input data, they also apply techniques such as downsampling using convolution which helps to learn the features if the input data. However, all these architectures have millions of parameters making training (and sampling) cumbersome. 
An alternative approach to reduce the complexity of machine learning algorithms is to use a random feature model (RFM) \cite{rahimi2007random, rahimi2008uniform} for approximating the kernels using a randomized basis. RFMs are derived from kernel-based methods which utilize a pre-defined nonlinear function basis called kernel $K(\x,\y)$. 
 From the neural network point of view, an RFM is a two-layer network with a fixed single hidden layer sampled randomly \cite{rahimi2007random, rahimi2008uniform}. 
Not only do random feature-based methods give similar results to that of a shallow network, but the model in itself is also interpretable which makes it a favorable method to use. Some recent works which use random features for a variety of tasks are explored in \cite{saha2022harfe,spade4, performer,choromanski2021hybrid,richardson2022srmd,nelsen2021random}. However, random features can lack expressibility due to their structure and thus we aim to propose an architecture that can be more flexible in learning yet retaining properties of random features. Inspired by semi-random features \cite{kawaguchi2018deep}, DDPM \cite{ddpm} and the idea of building deep random features, our diffusion random feature model serves as a potential alternative to the existing diffusion model architectures.

\section{Algorithm and Theory}\label{sec:model}
\label{headings}
Our proposed model is a diffusion model inspired random feature model. The main idea of our model is to build an interpretable deep random feature architecture for diffusion models. Our work is inspired by \textit{denoising diffusion probabilistic model } (DDPM) proposed in \cite{ddpm} and semi-random features proposed in \cite{kawaguchi2018deep}. Let $\x_0\in\R^d$ be the input data belonging to an unknown distribution $q(\x_0)$. Let $K$ denote the total number of timesteps in which the forward process is applied. Suppose $N$ is the number of features. For each timestep $k$, we build a noise predictor function $p_{\theta}$ of the form
\begin{equation}\label{eq:DRFM}
    p_{\theta} (\x_k,k) =
    (\sin(\x_k^T \mathbf{W} + \mathbf{b}^T)
    \odot 
    \cos (\boldsymbol{\tau}_k^T\boldsymbol{\theta}^{(1)}))\boldsymbol{\theta}^{(2)}, 
\end{equation}
where $\x_k\in\R^{d}$, $\mathbf{W}\in\R^{d\times N}$, $\mathbf{b} =\begin{bmatrix} b_1&\ldots& b_N\end{bmatrix}^T\in\R^N$, $\boldsymbol{\theta}^{(1)} = (\theta_{ki}^{(1)})\in\R^{K\times N}$, $\boldsymbol{\tau}_k\in\R^{K}$, $\boldsymbol{\theta}^{(2)} = (\theta_{ij}^{(2)})\in\R^{N\times d}$, and $\odot$ denotes element-wise multiplication. The vector $\boldsymbol{\tau}_k \, (k\geq 1)$ is a one-hot vector with the position of one corresponding to the timestep $k$. The motivation to use trainable weights corresponding to the time parameter is twofold: first, we want to associate importance to the timestep being used when optimizing the weights; secondly, we aim to build a deep random feature model layered through time. The inspiration for using cosine as an activation comes from the idea of positional encoding used for similar tasks. In general, positional encoding remains fixed, but for our method, we wish to make the weights associated with timestep random and trainable. This is done so that the model learns the noise level associated with the timestep.
Our aim is to train the parameters $\boldsymbol{\theta} =\{\boldsymbol{\theta}^{(1)},\boldsymbol{\theta}^{(2)}\}$ while $\mathbf{W}$ and $\mathbf{b}$ are randomly sampled and fixed. The model is trained using Algorithm \ref{alg:cap}.
\begin{figure}
    \centering
    \includegraphics[width = 4in, height=1.5in]{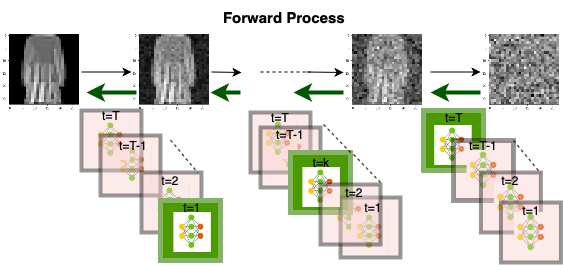}
    \caption{Representation of DRFM. The green boxes denote the random feature layer which is active corresponding to the timestep selected, while the other remain fixed.}
    \label{fig:DRFM_fig}
\end{figure}
\begin{algorithm}[ht!]
\caption{Training and sampling using DRFM}\label{alg:cap}
\begin{algorithmic}[1]
\Require Sample $\x_0\sim q(\x_0)$ where $q$ is an unknown distribution, variance schedule $\beta = \{\beta_1,...,\beta_K\}$ such that $0<\beta_1<\beta_2<...<\beta_K<1$, random weight matrix $\mathbf{W} = [w_{ij}]$ and bias vector $\mathbf{b}$ sampled from a distribution $\rho$ and total number of epochs \texttt{epoch}.
\Ensure 
\Statex{}
\Statex{\underline{Training}}
\Statex{}
\State Choose random timestep $k\in\{1,2,...,K\}$ and build vector $\boldsymbol{\tau}_k = [0,...0,1,0,...,0]^T$ where 1 is in $k^{th}$ position.

\State Define the forward process for $k=1,2,...,K$ as
    \begin{equation*}
        \x_k = \sqrt{1-\beta_k}\x_{k-1} + \sqrt{\beta_k}\e_k
    \end{equation*}
    where $\e_k\sim \mathcal{N}(\mathbf{0},\mathbf{I})$.
\For {$j$ in \texttt{epochs}}
\State{$k\sim\mathcal{U}\{1,2,...,K\} $}.
\State{Define $\boldsymbol{\tau}_k$ as in line 1.}
\State $p_{\theta}(\x_k,k)\gets  
    (\sin(\x_k^T \mathbf{W} + \mathbf{b})
    \odot 
    \cos (\boldsymbol{\tau}_k^T\boldsymbol{\theta^{(1)}}))\boldsymbol{\theta^{(2)}} $
\State Update $\boldsymbol{\theta} = [\boldsymbol{\theta}^{(1)},\boldsymbol{\theta}^{(2)}]$ by minimizing the loss $L = \dfrac{1}{K}\displaystyle\sum\limits_{k=1}^K\Big\|\e_k - p_{\theta
    }(\x_k,k)\Big\|_2^2$.
\EndFor
\Statex
\Statex{\underline{Sampling}}
\Statex
\State{Sample a point 
    $\x_K\sim\mathcal{N}(\mathbf{0},\mathbf{I})$}
\For{$k=K-1,...,1$}
\State{Sample $\e\sim\mathcal{N}(\mathbf{0},\mathbf{I})$}
\State{$\Tilde{\x}_{k-1} =  \dfrac{1}{\sqrt{1-\beta_k}}\left(\x_k - \dfrac{\sqrt{\beta_k}}{\sqrt{1-\prod\limits_{i=1}^k (1-\beta_i)}}p_{\theta}(\x_k,k) ) \right) + \beta_k\e$}
\EndFor
\Statex{\textbf{Output:} Generated sample $\Tilde{\x}_0$}
\end{algorithmic}
\end{algorithm}

\subsection{Theoretical Results}

We provide theoretical results corresponding to our proposed model. We first formulate our proposed model as a time-dependent layered random features model, followed by the proof of obtaining sample generation bounds. The obtained bounds help to prove that DRFM is capable of generating samples from the distribution on which it was trained using the results from \cite{chen2022sampling}. 

For a fixed timestep $k$, Eq. \ref{eq:DRFM} can be written as:

\begin{equation}\label{eq:rf}
\begin{aligned}
    p_{\theta}(\x_k,k) &= (\sin(\x_k^T\mathbf{W} + \mathbf{b}^T)\odot\cos(\boldsymbol{\tau}_k\boldsymbol{\theta}^{(1)}))\boldsymbol{\theta}^{(2)}\\
    &=\left(\sin\left([y_1,...,y_d]\begin{bmatrix}
        \omega_{11}&...&\omega_{1N}\\
        \vdots&...&\vdots\\
        \omega_{d1}&...&\omega_{dN}
    \end{bmatrix}+\begin{bmatrix}
        b_1\\
        \vdots\\
        b_N
    \end{bmatrix}\right)\odot \begin{bmatrix}
        \cos(\theta_{k1}^{(1)}) &...& \cos(\theta_{kN}^{(1)})
    \end{bmatrix}\right)\boldsymbol{\theta}^{(2)}\\
    &= \sin(\x_k^T\mathbf{W} + \mathbf{b}^T)\begin{bmatrix}
        \cos(\theta_{k1}^{(1)})\theta_{11}^{(2)}&...&\cos(\theta_{k1}^{(1)})\theta_{1d}^{(2)}\\
        \vdots & ...&\vdots\\
        \cos(\theta_{kN}^{(1)})\theta_{N1}^{(2)}&...&\cos(\theta_{kN}^{(1)})\theta_{Nd}^{(2)}
    \end{bmatrix}.
\end{aligned}
\end{equation}
 For each $j=1,...,N$, let $a_j = \sin\left(\displaystyle\sum\limits_{i=1}^d y_i\omega_{ij}+b_j\right)$. Note that $a_j$'s are fixed. Then Eq. \eqref{eq:rf} becomes,
\begin{align}
   p_{\theta}(\y,k) &= \begin{bmatrix}
        a_1 \cos(\theta_{k1}^{(1)}) & ... & a_N\cos(\theta_{kN}^{(1)})
    \end{bmatrix}\begin{bmatrix}
        \theta_{11}^{(2)}&...&\theta_{1d}^{(2)}\\
        \vdots & ...&\vdots\\
        \theta_{N1}^{(2)}&...&\theta_{Nd}^{(2)}
    \end{bmatrix}\\
    & = \begin{bmatrix}
        a_1 & ... & a_N
    \end{bmatrix}\begin{bmatrix}
        \cos(\theta_{k1}^{(1)})\theta_{11}^{(2)}&...&\cos(\theta_{k1}^{(1)})\theta_{1d}^{(2)}\\
        \vdots & ...&\vdots\\
        \cos(\theta_{kN}^{(1)})\theta_{N1}^{(2)}&...&\cos(\theta_{kN}^{(1)})\theta_{Nd}^{(2)}
    \end{bmatrix}
\end{align}

 Thus, for a fixed time $k$, our proposed architecture is a random feature model with a fixed dictionary having $N$ features denoted by $\phi(\langle \x_k,\boldsymbol{\omega}_i)\rangle = \sin(\x_k^T\boldsymbol{\omega}_i+b_i),\, \forall i=1,\ldots,N$ and learnt coeeficients $\mathbf{C} = (c_{ij})\in\mathbb{R}^{N\times d}$ whose entries are  

 $c_{ij} = \cos(\boldsymbol{\theta}^{(1)}_{ki})\boldsymbol{\theta}^{(2)}_{ij},\,\,\forall i=1,\ldots, N; j=1,\ldots,d.$

 As depicted in Figure \ref{fig:DRFM_fig}, DRFM can be visualized as $K$  random feature models stacked up in a column. Each random feature model has associated weights corresponding to its timestep which also gets optimized implicitly while training. The reformulation of DRFM into multiple random feature models leads us to our first Lemma stated below. We show that the class of functions generated by our proposed model is the same as the class of functions approximated by random feature models.

\begin{lemma}\label{lemma:equality}
    Let $\mathcal{G}_{k,\w}$ denote the set of functions that can be approximated by DRFM at each timestep $k$ defined as
    \begin{equation}
    \mathcal{G}_{k,\w} = \left\{\boldsymbol{g}(\x) = \sum\limits_{j=1}^N\cos(\boldsymbol{\theta}_{kj}^{(1)})\boldsymbol{\theta}_j^{(2)}\phi(\x_k^T\boldsymbol{\omega}_j)\Bigg | \Big\|\boldsymbol{\theta}_j^{(2)}\Big \|_{\infty}\leq \dfrac{C}{N}\right\}
\end{equation} 
Then for a fixed $k$ and $\mathcal{F}_{\w}$ defined in Eq. \eqref{eq:fw}, $\mathcal{G}_{k,\w}=\mathcal{F}_{\w}$.
\end{lemma}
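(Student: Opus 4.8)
The plan is to establish the set equality $\mathcal{G}_{k,\w} = \mathcal{F}_{\w}$ by proving the two inclusions separately, building directly on the reformulation already derived in Eq.~\eqref{eq:rf}. The essential observation is that, for a fixed timestep $k$, the DRFM output is linear in the random features $\phi(\x_k^T\boldsymbol{\omega}_j)$ with \emph{effective} coefficient vectors $\mathbf{c}_j := \cos(\theta_{kj}^{(1)})\,\boldsymbol{\theta}_j^{(2)} \in \R^d$. Hence the only real question is whether the set of reachable coefficient vectors $\{\mathbf{c}_j\}$ coincides with the admissible coefficients permitted in $\mathcal{F}_{\w}$ (Eq.~\eqref{eq:fw}), which reduces to understanding the image of the map $(\theta_{kj}^{(1)},\boldsymbol{\theta}_j^{(2)}) \mapsto \cos(\theta_{kj}^{(1)})\,\boldsymbol{\theta}_j^{(2)}$ subject to the constraint $\|\boldsymbol{\theta}_j^{(2)}\|_{\infty} \le C/N$.

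For the inclusion $\mathcal{G}_{k,\w} \subseteq \mathcal{F}_{\w}$, I would take an arbitrary $\boldsymbol{g} \in \mathcal{G}_{k,\w}$, read off its effective coefficients $\mathbf{c}_j = \cos(\theta_{kj}^{(1)})\,\boldsymbol{\theta}_j^{(2)}$, and verify that they meet the admissibility constraint of $\mathcal{F}_{\w}$. Since $|\cos(\theta_{kj}^{(1)})| \le 1$, we immediately get $\|\mathbf{c}_j\|_{\infty} = |\cos(\theta_{kj}^{(1)})|\,\|\boldsymbol{\theta}_j^{(2)}\|_{\infty} \le \|\boldsymbol{\theta}_j^{(2)}\|_{\infty} \le C/N$, so $\boldsymbol{g}$ is precisely a random feature expansion with admissible coefficients and therefore lies in $\mathcal{F}_{\w}$.

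For the reverse inclusion $\mathcal{F}_{\w} \subseteq \mathcal{G}_{k,\w}$, the plan is constructive: given any $f \in \mathcal{F}_{\w}$ with coefficient vectors $\mathbf{c}_j$ obeying $\|\mathbf{c}_j\|_{\infty} \le C/N$, I would exhibit DRFM parameters that realize it. The simplest choice sets $\theta_{kj}^{(1)} = 0$, so that $\cos(\theta_{kj}^{(1)}) = 1$, together with $\boldsymbol{\theta}_j^{(2)} = \mathbf{c}_j$; then $\|\boldsymbol{\theta}_j^{(2)}\|_{\infty} = \|\mathbf{c}_j\|_{\infty} \le C/N$ satisfies the defining constraint of $\mathcal{G}_{k,\w}$, and the resulting DRFM function agrees with $f$ term by term. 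This shows that the cosine factor never shrinks the set of reachable coefficients, because it can always be saturated at $1$.

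The point to be careful about — rather than a genuine obstacle — is that the two classes phrase their norm constraints on different objects (on $\boldsymbol{\theta}_j^{(2)}$ in $\mathcal{G}_{k,\w}$ versus on the full coefficient $\mathbf{c}_j$ in $\mathcal{F}_{\w}$), so the equality hinges precisely on the range of $t \mapsto \cos(t)$ being $[-1,1]$: the forward inclusion uses $|\cos|\le 1$ to contract, while the reverse uses attainment of $\cos = 1$ to recover every admissible coefficient. I would also remark that neither argument depends on $k$ beyond the selection of the row $\theta_{kj}^{(1)}$, which is why the identity holds at every fixed timestep; this is exactly the fact that later lets us treat each time-slice of the DRFM as an ordinary random feature model and import the associated generalization guarantees.
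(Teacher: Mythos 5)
Your proposal is correct and follows essentially the same route as the paper: the forward inclusion via $|\cos(\theta_{kj}^{(1)})|\le 1$ so that the effective coefficients $\cos(\theta_{kj}^{(1)})\boldsymbol{\theta}_j^{(2)}$ satisfy the $C/N$ bound, and the reverse inclusion by choosing $\theta_{kj}^{(1)}=0$ and $\boldsymbol{\theta}_j^{(2)}=\boldsymbol{\alpha}_j$. Your added remark about where the norm constraints are placed in each class is a helpful clarification but does not change the argument.
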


\begin{proof}
    The above equality can be proved easily. \\
    Fix $k$. Consider $\boldsymbol{g}(\x)\in\mathcal{G}_{k,\w}$, then $\boldsymbol{g}(\x) = \displaystyle\sum\limits_{j=1}^N\cos(\boldsymbol{\theta}_{kj}^{(1)})\boldsymbol{\theta}_j^{(2)}\phi(\x_k^T\boldsymbol{\omega}_j)$. Clearly, $\boldsymbol{g}(\x)\in\mathcal{F}_{\w}$ as $\Big\|\cos(\boldsymbol{\theta}_{kj}^{(1)})\boldsymbol{\theta}_j^{(2)}\Big \|_{\infty}\leq\Big\|\boldsymbol{\theta}_j^{(2)}\Big \|_{\infty} \leq \dfrac{C}{N}$. Thus $\mathcal{G}_{k,\w}\subseteq\mathcal{F}_{\w}$.\\
    Conversely let $\boldsymbol{f}(\x)\in\mathcal{F}_{\w}$, then $\boldsymbol{f}(\x) = \displaystyle\sum\limits_{j=1}^N \boldsymbol{\alpha_j} \,\phi(\mathbf{x}_k^T\boldsymbol{\omega}_j)$. Choose $\boldsymbol{\theta}^{(2)}_j = \boldsymbol{\alpha}_j$ and $\boldsymbol{\theta}^{(1)}_{kj} = [0,\cdots,0]$.\\
    As $\cos(\boldsymbol{\theta}_{kj}^{(1)})\boldsymbol{\theta}_j^{(2)} = \boldsymbol{\alpha}_j$, thus $\boldsymbol{f}(\x) = \displaystyle\sum\limits_{j=1}^N \cos(\boldsymbol{\theta}_{kj}^{(1)})\boldsymbol{\theta}_j^{(2)}\phi(\mathbf{x}^T\boldsymbol{\omega}_j)$ and $\|\boldsymbol{\theta}_j^{(2)}\|_{\infty} = \|\boldsymbol{\alpha}_j\|_{\infty}\leq\dfrac{C}{N}$. \\
    Hence $\f(\x)\in\mathcal{G}(k,\w)$ and $\mathcal{F}_{\w}\subseteq\mathcal{G}_{k,\w}$.
\end{proof}

In the next Lemma stated below, we extend results from \cite{rahimi2007random,rahimi2008weighted,rahimi2008uniform}
to find approximation error bounds for vector-valued functions.

\begin{lemma}\label{th:rfm}
     Let $X\subset \R^d$ denote the training dataset and suppose $q$ is a measure on X, and $\boldsymbol{f}^*$ a function in $\mathcal{F}_{\rho}$ where 
     \begin{equation*}
         \mathcal{F}_{\rho}=\left\{ \boldsymbol{f}(\mathbf{x}) = \int_{\Omega} \boldsymbol{\alpha}(\boldsymbol{\omega}) \phi(\mathbf{x};\boldsymbol{\omega}) \, d\boldsymbol{\omega} \ \Bigg| \  \| \boldsymbol{\alpha}(\boldsymbol{\omega})\|_{\infty}\leq C{\rho(\boldsymbol{\omega})}\right\}.
     \end{equation*}  
     If $[\boldsymbol{\omega}_j]_{j\in [N]}$ are drawn iid from $\rho$, then for $\delta > 0$, with probability at least $1 - \delta$ over $[\boldsymbol{\omega}_j]_{j\in [N]}$, there exists a function $\boldsymbol{f}^\sharp\in\mathcal{F}_{\boldsymbol{\omega}}$ such that
    \begin{equation}\label{eq:bound}
        \|\boldsymbol{f}^\sharp-\boldsymbol{f}^*\|_2\leq \dfrac{C\sqrt{d}}{\sqrt{N}}\left(1+\sqrt{2\log\dfrac{1}{\delta}}\right),
    \end{equation}
    where \begin{equation}\label{eq:fw}
    \mathcal{F}_{\boldsymbol{\omega}} =\left\{\boldsymbol{f}(\x) = \sum\limits_{j=1}^N \boldsymbol{\alpha_j} \,\phi(\mathbf{x}^T\boldsymbol{\omega}_j)\Bigg| \|\boldsymbol{\alpha_j}\|_{\infty}\leq \frac{C}{N}\right\}.\end{equation}
\end{lemma}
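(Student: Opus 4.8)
The plan is to prove this as a Monte Carlo approximation result: realize $\boldsymbol{f}^*$ as an expectation over $\rho$ and show that its empirical average over the sampled frequencies concentrates in $L^2(q)$. First I would rewrite the integral representation of $\boldsymbol{f}^* \in \mathcal{F}_{\rho}$ as an importance-sampled expectation,
\[
\boldsymbol{f}^*(\x) = \int_\Omega \boldsymbol{\alpha}(\w)\,\phi(\x;\w)\,d\w = \mathbb{E}_{\w\sim\rho}\!\left[\frac{\boldsymbol{\alpha}(\w)}{\rho(\w)}\,\phi(\x;\w)\right].
\]
Given iid samples $\w_1,\dots,\w_N\sim\rho$, I would define the candidate $\boldsymbol{f}^\sharp(\x) = \sum_{j=1}^N \boldsymbol{\alpha}_j\,\phi(\x^T\w_j)$ with coefficients $\boldsymbol{\alpha}_j = \frac{\boldsymbol{\alpha}(\w_j)}{N\rho(\w_j)}$. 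The constraint defining $\mathcal{F}_{\rho}$, namely $\|\boldsymbol{\alpha}(\w)\|_\infty \leq C\rho(\w)$, immediately gives $\|\boldsymbol{\alpha}_j\|_\infty \leq C/N$, so $\boldsymbol{f}^\sharp$ lies in $\mathcal{F}_{\w}$ as required; this is exactly the matching needed to land in the finite-feature class of Eq.~\eqref{eq:fw}. By construction $\boldsymbol{f}^\sharp$ is an unbiased estimator of $\boldsymbol{f}^*$ at every $\x$.

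Next I would control the mean squared $L^2(q)$ error. Writing $F_j(\x)=\frac{\boldsymbol{\alpha}(\w_j)}{\rho(\w_j)}\phi(\x;\w_j)$ so that $\boldsymbol{f}^\sharp=\frac1N\sum_j F_j$ with iid, mean-$\boldsymbol{f}^*$ summands, Fubini and independence give, for each fixed $\x$,
\[
\mathbb{E}_{\w}\big\|\boldsymbol{f}^\sharp(\x)-\boldsymbol{f}^*(\x)\big\|_2^2 = \tfrac1N\,\mathbb{E}\big\|F_1(\x)-\boldsymbol{f}^*(\x)\big\|_2^2 \leq \tfrac1N\,\mathbb{E}\big\|F_1(\x)\big\|_2^2 \leq \tfrac{C^2 d}{N},
\]
where the last inequality uses boundedness of the feature map ($|\phi|\leq 1$, as for $\phi=\sin(\cdot)$) together with $\|\boldsymbol{\alpha}/\rho\|_\infty\leq C$, and the factor $d$ arises from aggregating the $d$ output coordinates in the Euclidean norm. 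Integrating against the probability measure $q$ and applying Jensen yields $\mathbb{E}_{\w}\|\boldsymbol{f}^\sharp-\boldsymbol{f}^*\|_2 \leq C\sqrt{d}/\sqrt{N}$.

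Finally I would upgrade this in-expectation bound to a high-probability statement via McDiarmid's bounded-differences inequality applied to $G(\w_1,\dots,\w_N)=\|\boldsymbol{f}^\sharp-\boldsymbol{f}^*\|_2$. Replacing a single $\w_i$ changes $\boldsymbol{f}^\sharp$ by $\tfrac1N(F_i-F_i')$, and since $\|F_i(\x)\|_2\leq C\sqrt{d}$ pointwise (hence in $L^2(q)$), the triangle inequality bounds the corresponding increment of $G$ by $2C\sqrt{d}/N$. McDiarmid then gives $\Pr[\,G\geq \mathbb{E}G + t\,]\leq\exp\!\big(-t^2 N/(2C^2 d)\big)$; setting the right-hand side equal to $\delta$ produces the deviation term $\tfrac{C\sqrt{d}}{\sqrt{N}}\sqrt{2\log(1/\delta)}$, which added to the mean bound gives exactly the estimate in Eq.~\eqref{eq:bound}.

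The step I expect to demand the most care is not any single inequality but the vector-valued bookkeeping threaded through all of them: the hypothesis is phrased with an $\ell_\infty$ bound on the coefficient vectors $\boldsymbol{\alpha}(\w)$, whereas the error and the bounded-differences constant are measured in the Euclidean ($\ell_2$) norm over the $d$ outputs, and it is precisely the passage $\|\cdot\|_\infty\leq C \Rightarrow \|\cdot\|_2\leq C\sqrt{d}$ that generates the $\sqrt{d}$ factor in the final bound. I would also make explicit the standing assumptions that $\phi$ is bounded (true for the sine features used here) and that $q$ is a probability measure, since both are used when passing from the pointwise second-moment estimate to the integrated $L^2(q)$ bound.
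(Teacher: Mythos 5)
Your proposal is correct and follows essentially the same route as the paper's proof: an importance-sampled Monte Carlo average $\boldsymbol{f}^\sharp=\frac1N\sum_j\frac{\boldsymbol{\alpha}(\w_j)}{\rho(\w_j)}\phi(\cdot;\w_j)$, a variance bound of $C^2d/N$ combined with Jensen to control $\mathbb{E}\|\boldsymbol{f}^\sharp-\boldsymbol{f}^*\|_2$, and McDiarmid's bounded-differences inequality for the high-probability deviation. If anything, your bookkeeping of the bounded-difference constant ($2C\sqrt{d}/N$, from the triangle inequality over the two replaced summands) is slightly more careful than the paper's and is the one that reproduces the stated $\sqrt{2\log(1/\delta)}$ factor exactly.
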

\begin{proof}
We follow the proof technique described in \cite{rahimi2008weighted}. 
As $\f^*\in\mathcal{F}_{\rho}$, then $\f^* (\x)=\displaystyle\int_{\boldsymbol{\Omega}}\boldsymbol{\alpha}(\boldsymbol{\omega})\phi(\x;\boldsymbol{\omega})d\boldsymbol{\omega}$. 
   Construct $\f_k = \boldsymbol{\beta}_k\phi(\cdot;\boldsymbol{\omega}_k)$, $k=1,\cdots,N$ such that $\boldsymbol{\beta}_k = \dfrac{\boldsymbol{\alpha}(\boldsymbol{\omega}_k)}{\rho(\boldsymbol{\omega}_k)} = \dfrac{1}{\rho(\boldsymbol{\omega}_k)}\begin{bmatrix}
       \alpha_1(\boldsymbol{\omega}_k)\\
       \vdots\\
       \alpha_d (\boldsymbol{\omega}_k)
   \end{bmatrix}$. \\
   Note that $\mathbb{E}_{\boldsymbol{\omega}}(\f_k) = \displaystyle\int_{\boldsymbol{\omega}}\dfrac{\boldsymbol{\alpha}(\boldsymbol{\omega}_k)}{\rho(\boldsymbol{\omega}_k)}\phi(\x;\boldsymbol{\omega})\rho(\boldsymbol{\omega}_k)d\boldsymbol{\omega} = \f^*$.\\
   Define the sample average of these functions as $\f^{\sharp}(\x) = \displaystyle\sum\limits_{k=1}^N\frac{\boldsymbol{\beta}_k}{N}\phi(\x;\boldsymbol{\omega}_k)$. 
   \\
   As $\Big\|\dfrac{\boldsymbol{\beta}_k}{N}\Big\|_{\infty}\leq \dfrac{C}{N}$, thus $\f^{\sharp}\in\mathcal{F}_{\w}$. Also note that $\|\boldsymbol{\beta}_k\phi(\cdot;\w_k)\|_2\leq\sqrt{d}\|\|\boldsymbol{\beta}_k\phi(\cdot;\w_k)\|_{\infty}\leq\sqrt{d}C$.\\
   In order to getthe desired result, we use McDiarmid's inequality. Define a scaler function on $F = \{\f_1,\cdots,\f_N\}$ as $g(F) = \|\f^\sharp - \mathbb{E}_{F}\f^\sharp\|_2$. We claim that the function $g$ is stable under perturbation of its $i$th argument.\\
   Define $\Tilde{F}=\{\f_1,\cdots,\Tilde{\f}_i,\cdots,\f_N\}$ i.e., $\Tilde{F}$ differs from $F$ only in its $i$th element. Then
   \begin{equation}
       \Big|g(F) - g(\Tilde{F})\Big| = \Big|\Big\|\f^\sharp - \mathbb{E}_{F}\f^\sharp\Big\|_2 - \Big\|\Tilde{\f}^\sharp - \mathbb{E}_{\Tilde{F}}\f^\sharp\Big\|_2\Big|\leq \Big\|\f^\sharp -\Tilde{\f}^\sharp\Big\|_2
   \end{equation}
   where the above inequality is obtained from triangle inequality. Further,
   \begin{equation}
       \Big\|\f^\sharp -\Tilde{\f}^\sharp\Big\|_2 = \dfrac{1}{N}\Big\|\f_i - \Tilde{\f}_i\Big\|_2\leq \Big\|(\boldsymbol{\beta}_i - \Tilde{\boldsymbol{\beta}}_i)\phi(\cdot;\w)\Big\|_2\leq\dfrac{\sqrt{d}C}{N}.
   \end{equation}
   Thus $\mathbb{E}[g(F)^2] = \mathbb{E}\left[\Big\|\f^\sharp - \mathbb{E}_{F}\f^\sharp\Big\|_2^2\right] = \dfrac{1}{N}\left[\mathbb{E}\left[\Bigg\|\displaystyle\sum\limits_{k=1}^N\f_k\Bigg\|_2^2\right] - \Bigg\|\mathbb{E}\left[\displaystyle\sum\limits_{k=1}^N \f_k\right]\Bigg\|_2^2\right]$.\\
   Since $\|\f_k\|_2\leq\sqrt{d}C$, using Jensen's inequality and above result we get
   \begin{equation}
       \mathbb{E}[g(F)]\leq\sqrt{\mathbb{E}(g^2(F))}\leq \dfrac{\sqrt{d}C}{\sqrt{N}}.
   \end{equation}
   Finally the required bounds can be obtained by combining above result and McDiarmid's inequality.
\end{proof}

Using the above-stated Lemmas and  results given in \cite{chen2022sampling}, we derive our main theorem. Specifically, we quantify the total variation between the distribution learned by our model and the true data distribution.

\begin{theorem}\label{th:main}
For a given probability density $q(\x_0)$ on $\mathbb{R}^d$ suppose the following conditions hold:
\begin{enumerate}
    \item For all $t \geq 0$, the score $\nabla \log q(\x(t))$ is $L-$Lipschitz.
    \item The second moment of $q(\x_0)$ is finite i.e.,  $m_2^2 = E_{q(\x_0)}[\|.\|^2]<\infty$.
\end{enumerate}
Let $p_{\theta}(\x_0,0)$ be the sample generated by DRFM after $K$ timesteps at terminal time $T$. Then for the SDE formulation of the DDPM algorithm, if the step size $h := T/K$ satisfies $h \lesssim 1/L$, where $L \geq 1$. Then,
\begin{align}\label{eq:tv}
    TV(p_{\theta}(\x_0,0),q(\x_0))&\lesssim \sqrt{KL(q(\x_0)\|\gamma)}\exp(-T) + (L\sqrt{dh} + Lm_2h)\sqrt{T} + \dfrac{C_2\sqrt{TKd}}{\sqrt{N}}\left(1+\sqrt{2\log\dfrac{1}{\delta}}\right)
\end{align}
where $C_2 \geq \max\limits_{1\leq i\leq N, 1\leq j \leq d}\left|
\boldsymbol{\theta}_{ij}^{(2)}\right|$ and $\gamma$ is the p.d.f. of the multivariate normal distribution with mean vector $\mathbf{0}$ and covariance matrix $\mathbf{I}$. 
 \end{theorem}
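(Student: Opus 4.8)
The plan is to treat the sampling guarantee of \cite{chen2022sampling} as a black box and then supply the one input it needs, namely a bound on the score-estimation error, via the random feature structure established in Lemmas~\ref{lemma:equality} and~\ref{th:rfm}. Under assumptions (1) and (2) and the step-size condition $h \lesssim 1/L$, the theorem of \cite{chen2022sampling} bounds the total variation between the law of the (Euler--Maruyama discretized) reverse SDE and $q(\x_0)$ by a sum of three quantities: a convergence term $\sqrt{KL(q(\x_0)\|\gamma)}\exp(-T)$ measuring how far the forward process is from its Gaussian stationary law $\gamma$, a discretization term $(L\sqrt{dh}+Lm_2h)\sqrt{T}$ from the time-stepping, and a score term $\varepsilon_{\text{score}}\sqrt{T}$, where
\begin{equation*}
    \varepsilon_{\text{score}}^2 = \sum_{k=1}^{K}\mathbb{E}\big\|s_\theta(\x_k,k)-\nabla_{\x_k}\log q(\x_k)\big\|_2^2
\end{equation*}
aggregates the score-matching error across timesteps. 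The first two terms coincide verbatim with the first two terms of the claim, so the entire task reduces to controlling $\varepsilon_{\text{score}}$ through the DRFM architecture.

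First I would make the identification between the DRFM noise predictor and the learned score explicit. From the DSM reformulation in Eq.~\eqref{eq:ddpmtodsm} we have $s_\theta(\x_k,k)=-\frac{1}{\sqrt{1-\overline\alpha_k}}\e_\theta(\x_k,k)$, and $p_\theta(\x_k,k)$ plays the role of $\e_\theta$; hence bounding the per-timestep score error is equivalent to bounding the DRFM noise-prediction error. Next, for each fixed $k$, Lemma~\ref{lemma:equality} shows the function realized by DRFM lies in the random feature class $\mathcal{F}_{\w}$ with feature map $\phi(\x_k^T\boldsymbol{\omega}_j)$ and coefficients $c_{ij}=\cos(\boldsymbol{\theta}^{(1)}_{ki})\boldsymbol{\theta}^{(2)}_{ij}$ satisfying $|c_{ij}|\leq|\boldsymbol{\theta}^{(2)}_{ij}|\leq C_2$. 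Assuming the target score at timestep $k$ lies in the integral class $\mathcal{F}_\rho$, Lemma~\ref{th:rfm} with constant $C=C_2$ yields, with probability at least $1-\delta$, a realizable DRFM whose $L^2$ distance to the target is at most $\frac{C_2\sqrt d}{\sqrt N}(1+\sqrt{2\log(1/\delta)})$.

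I would then aggregate the per-timestep estimates. Summing the squared approximation errors over the $K$ timesteps gives $\varepsilon_{\text{score}}^2 \lesssim K\cdot\frac{C_2^2 d}{N}\big(1+\sqrt{2\log(1/\delta)}\big)^2$, so that $\varepsilon_{\text{score}}\sqrt{T}\lesssim \frac{C_2\sqrt{TKd}}{\sqrt N}\big(1+\sqrt{2\log(1/\delta)}\big)$, which is exactly the third term of the claimed bound. Adding this to the convergence and discretization contributions from \cite{chen2022sampling} produces Eq.~\eqref{eq:tv} and completes the argument.

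The hard part will be the middle step: making Lemma~\ref{th:rfm} genuinely applicable to the score at every timestep. This requires, first, an expressivity hypothesis that the true (conditional) score $\nabla_{\x_k}\log q(\x_k)$ belongs to $\mathcal{F}_\rho$ — something not listed among the stated assumptions and which would have to be either assumed or justified from the Gaussian form of $q(\x_k|\x_0)$. Second, one must reconcile the high-probability events across the $K$ timesteps: since DRFM draws a single shared random matrix $\mathbf{W}$ and bias $\b$, a single invocation of Lemma~\ref{th:rfm} ideally controls all timesteps at once, but care is needed so that the coupling through the shared randomness does not force a union bound that would upgrade $\log(1/\delta)$ to $\log(K/\delta)$. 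Finally, one should verify that the score-error quantity appearing in \cite{chen2022sampling} (an expectation against the evolving marginals $q(\x_k)$) is precisely the $L^2$ quantity bounded by Lemma~\ref{th:rfm}, rather than a norm taken against a different measure; matching these two norms is the most delicate point of the proof.
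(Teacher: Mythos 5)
Your proposal follows essentially the same route as the paper, whose entire proof of Theorem~\ref{th:main} is the single remark that one should ``combine the proof of Lemma \ref{lemma:equality}, \ref{th:rfm} and results of Theorem 1 from \cite{chen2022sampling}''; your decomposition into the convergence, discretization, and score-estimation terms, with Lemmas~\ref{lemma:equality} and~\ref{th:rfm} supplying the $\frac{C_2\sqrt{TKd}}{\sqrt{N}}\left(1+\sqrt{2\log\frac{1}{\delta}}\right)$ bound on the score error, is exactly what the authors intend. The three caveats you raise at the end (the unstated expressivity hypothesis that the true score lies in $\mathcal{F}_{\rho}$, the handling of the high-probability event across the $K$ timesteps with shared randomness, and the mismatch between the $L^2(q)$ norm of Lemma~\ref{th:rfm} and the expectation against the evolving marginals required by \cite{chen2022sampling}) are genuine and are not resolved in the paper either, so your write-up is, if anything, more honest about the argument's gaps than the original.
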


The error bound given in Eq. \eqref{eq:tv} consists of three types of errors: (i) convergence error of the forward process; (ii) discretization error of the associated SDE with step size $h>0$; and (iii) score estimation error, respectively. Note that the first two terms are independent of the model architecture. While for most models score estimation is difficult, our main contribution involves quantifying that error which gives us an estimate of the number of parameters are needed for the third error to become negligible. We can combine the proof of Lemma \ref{lemma:equality}, \ref{th:rfm} and results of Theorem 1 from \cite{chen2022sampling} to get the required bounds.

\section{Experimental Results}\label{sec:results}
In order to validate our findings, we train our proposed model on both image and audio data. We evaluate the validity of our model on two tasks: (i) generating data from noise and; (ii) denoising data corrupted with gaussian noise. For audio data, we use two music samples corresponding to flute and guitar. The experiment images was done by taking one hundred images of the class ``dress" and ``shoes" separately from fashion MNIST dataset. 

We compare our method with: (1) fully connected version of DRFM denoted by NN where we train $[\mathbf{W},\mathbf{b},\boldsymbol{\theta}^{(1)},\boldsymbol{\theta}^{(2)}]$ while preserving the number of trainable parameters; (2) classical random feature approach with only $\boldsymbol{\theta}^{(2)}$ being trainable denoted by RF. The details of the implementation of all the experiments and their results are described in the sections below.

\subsection{Results on Fashion-MNIST data}
We create the image dataset for our experiments by considering 100 images of size $28\times 28$ taken from a particular class of fashion MNIST dataset. The model is trained with 80000 semi-random features with $100$ equally spaced timesteps between 0.0001 and 0.02 trained for 30000 epochs. For generating images, we generated fifteen samples from randomly sampled noise. In our second task we give fifteen images from the same class (but not in the training set) corrupted with noise and test if our model can denoise the image.
Results from Figure \ref{fig:generated} demonstrate that our method learns the overall features of the input distribution. Although the model is trained with a very small number of training data (only one hundred images) and timesteps (one hundred timesteps), we can see that the samples generated from pure noise have already started to resemble the input distribution of dresses.
\begin{figure}
    \centering
    \includegraphics[scale = 0.25]{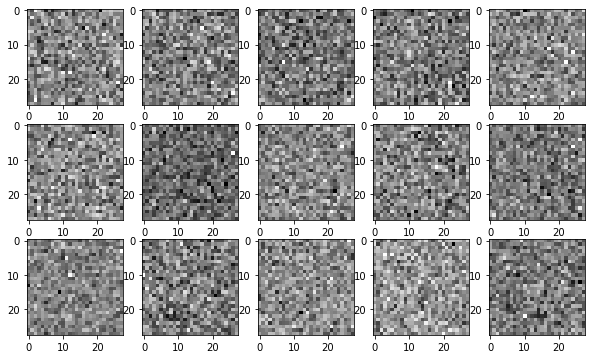}
    \includegraphics[scale =0.25]{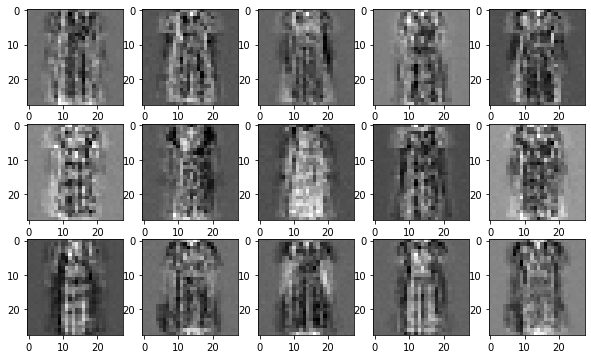}\\
    \includegraphics[scale = 0.25]{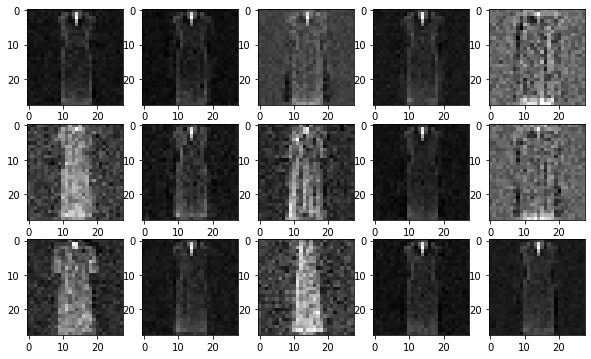}
    \includegraphics[scale = 0.25]{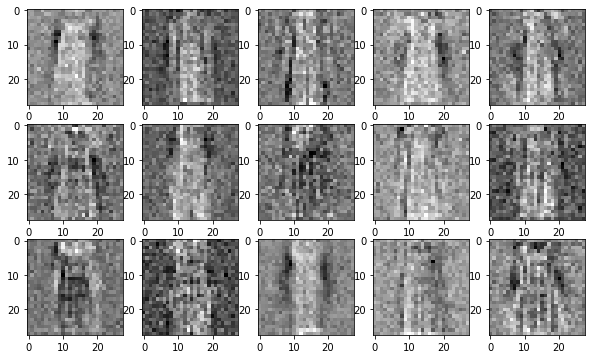}
    \caption{Figures generated from random noise when trained on 100 ``dress" images.  Samples from: Gaussian noise (top left), DRFM (top right), fully connected network (bottom left) and random feature model (bottom right). }
    \label{fig:generated}
\end{figure}
For NN model we note that most of the generated samples are the same with a dark shadow while for RF model, the generated images are very noisy and barely recognizable.

We also test our models ability to remove noise from images. We take fifteen random images of ``dress" not in the training data and corrupt it with $20\%$ noise. The proposed model is used for denoising. In Figure \ref{fig:denoised} we can see that the model can recover a denoised image which is in fact better than the results obtained when sampling from pure noise. The generated images for most of the noised data look very similar to the original ones in terms of their shape and size. The NN model performs quite well for most of the images, however for a few cases it fails to denoise anything and the final image looks similar to the noisy input. RF model fails to denoise and the resultant images still contain noise.

\begin{figure}[h!]
    \centering
    \includegraphics[scale = 0.25]{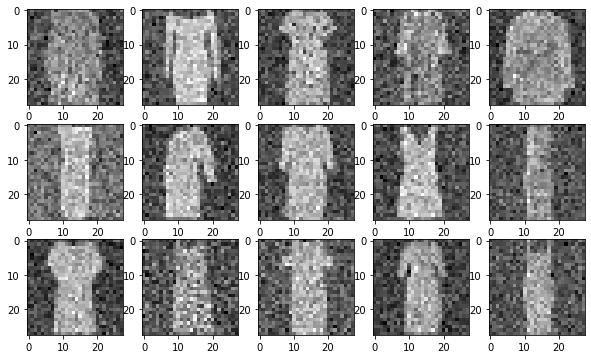}
    \includegraphics[scale = 0.25]{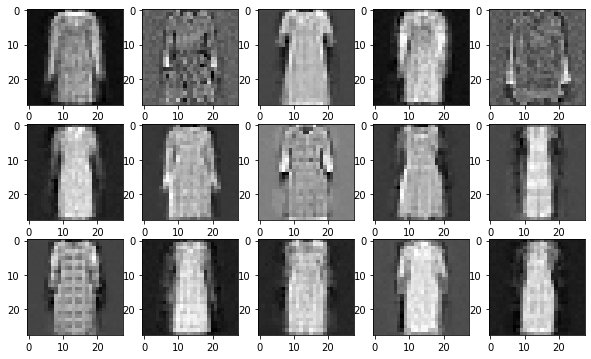}\\
    \includegraphics[scale = 0.25]{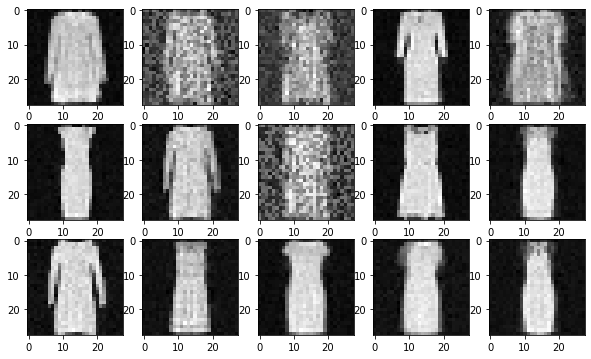}
     \includegraphics[scale = 0.25]{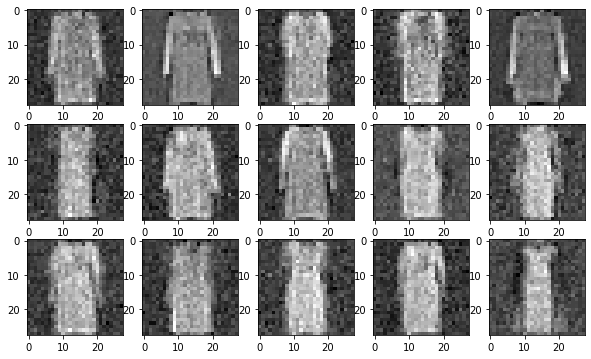}
    \caption{Figures denoised from corrupted ``dress" images. Corrupted images (top left); denoised images by DRFM (top right), fully connected network (bottom left) and random feature model (bottom right).}
    \label{fig:denoised}
\end{figure}

In order to check the effect of the number of timesteps on the sampling power of DRFM, we also run our model using 1000 timesteps between 0.0001 and 0.02. The images generated/denoised are given in Figure \ref{fig:DRFM1000}. Samples generated from noise seem to improve with the increase in number of timesteps. However, for the task of denoising, the results are better with 100 timesteps. The improvement in sample quality with an increased number of timesteps for generating data is expected as more number of reverse steps would be required to generate a point in the input distribution than for the task of denoising.

\begin{figure}[h]
    \centering
    \includegraphics[scale = 0.25]{random_noise_dress.png}
    \includegraphics[scale = 0.25]{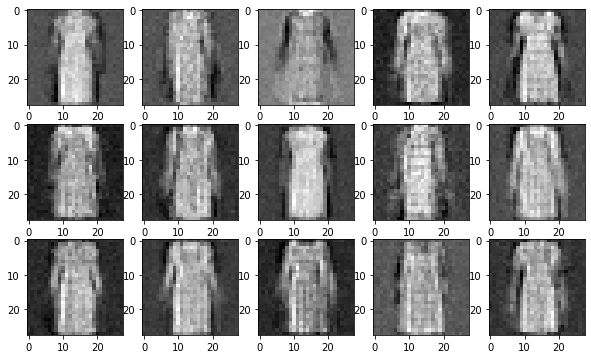}\\
     \includegraphics[scale = 0.25]{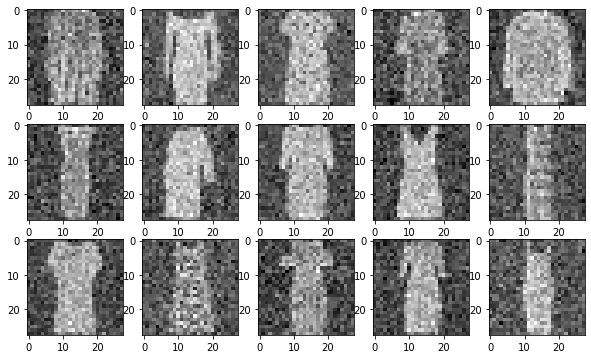}
     \includegraphics[scale = 0.25]{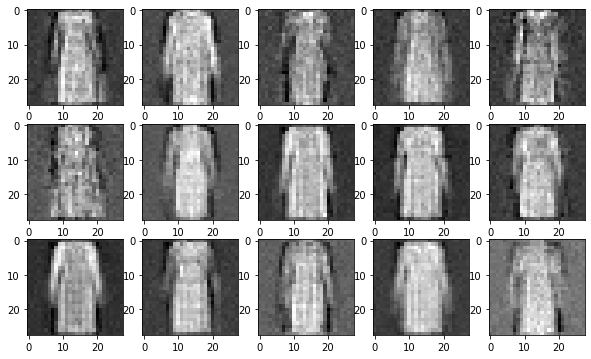}
    \caption{Generated and denoised images trained on 100 ``dress" images with 1000 timesteps. Top row depicts samples generated from Gaussian noise. Bottom row depicts denoised images. }
    \label{fig:DRFM1000}
\end{figure}

We also conduct experiment of a different class of data with the same hyperparameters as discussed above with 100 timesteps. This time we select the class ``shoes" and test our model's performance. The conclusions drawn supports the claims we made with the previous class of data where DRFM can denoise images well while only learning to generate basic features of the model class when generating from pure noise. The plots for the above are depicted Figure \ref{fig:shoe}.

\begin{figure}[h!]
    \centering
    \includegraphics[scale=0.25]{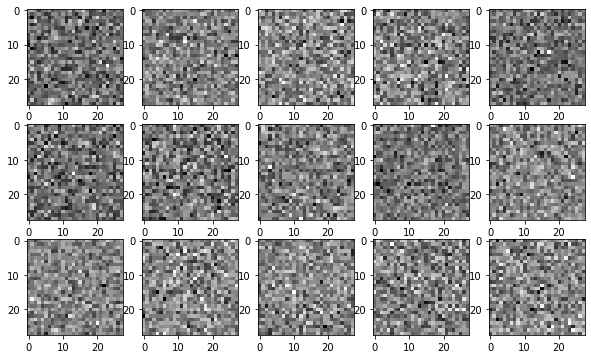} 
    \includegraphics[scale=0.25]{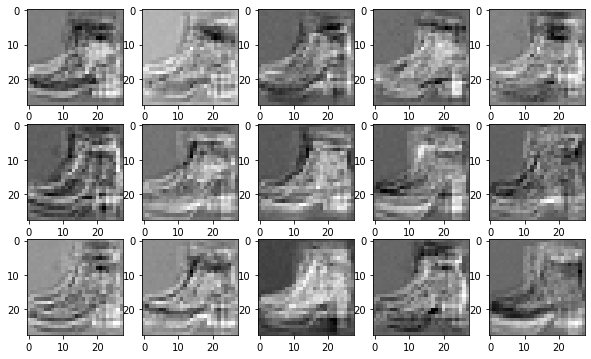}\\
    \includegraphics[scale=0.25]{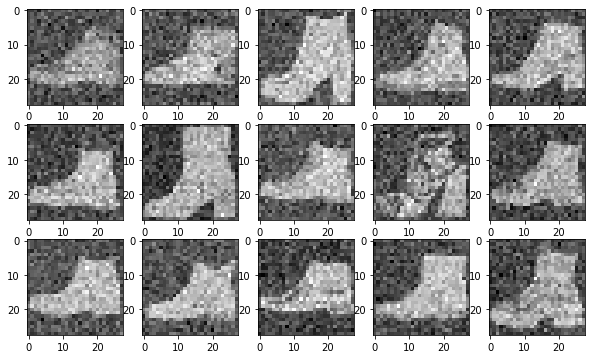} 
    \includegraphics[scale=0.25]{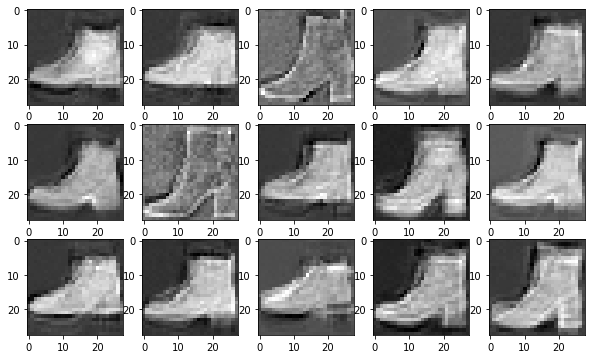}
    \caption{Samples generated from noise and noisy images when trained on 100 images of ``shoes". Top row depicts samples generated from Gaussian noise. Bottom row depicts denoised images.}
    \label{fig:shoe}
\end{figure}

\subsection{Results on Audio Data}
Our second experiment involves learning to generate and denoise audio signals. We use one data sample each taken from two different instruments, namely guitar and flute. There are a total of 5560 points for each instrument piece. We train our model using 15000 random features with 100 timesteps taken between 0.0001 and 0.02 for 30,000 epochs. 
The samples are generated from pure noise using the trained model to remove noise for each reverse diffusion step. We also test if our model is capable of denoising a signal when it is not a part of the training set explicitly (but similar to it). For that, we use a validation data point containing samples from a music piece when both guitar and flute are played simultaneously. We plot the samples generated from pure Gaussian noise in Figure \ref{fig:genaud}.
The plots in Figure \ref{fig:genaud} demonstrate the potential of DRFM to generate signals not a part of the original training data. Figure \ref{fig:genaud}(b) shows that there is no advantage of using a NN model since the results are much worse. The network does not learn anything and the signal generated is just another noise. On the other hand, our proposed model DRFM generates signals that are similar to the original two signals used to train the data.
\begin{figure}[h!]
\centering

\begin{subfigure}[b]{0.3\textwidth}
    \centering
    \includegraphics[scale = 0.45]{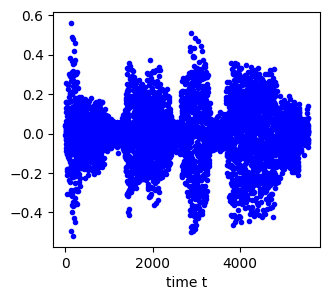}
    \caption{Final sample (DRFM)}
    \end{subfigure}
    \centering
\begin{subfigure}[b]{0.3\textwidth}
    \centering
    \includegraphics[scale = 0.45]{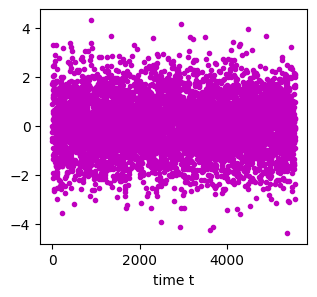}
    \caption{Final sample (NN)}
    \end{subfigure}
    \begin{subfigure}[b]{0.3\textwidth}
    \centering
    \includegraphics[scale = 0.45]{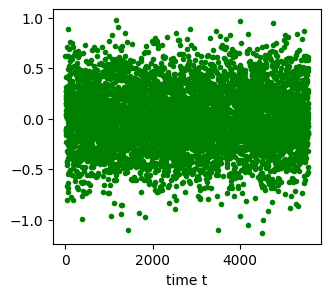}
    \caption{Final sample (RF)}
    \end{subfigure}
    
    \caption{Results using (a)DRFM; (b) NN; (c) random feature model (RF). }
    \label{fig:genaud}
\end{figure}

Figure \ref{fig:denoisedaud} shows that when our trained model is applied to a validation data point, it can successfully recover and denoise the signal. This is more evident when the signal is played as an audio file. There are however some extra elements that get added while recovering due to the presence of noise which is a common effect of using diffusion models for denoising.

\begin{figure}[h!]
\centering

\begin{subfigure}[b]{0.3\textwidth}
    \centering
    \includegraphics[scale = 0.45]{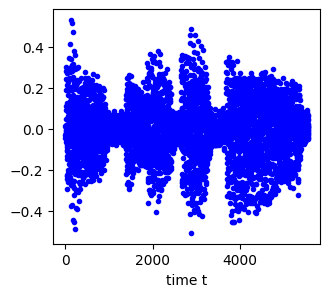}
    \caption{Denoised signal (DRFM)}
    \end{subfigure}
\begin{subfigure}[b]{0.3\textwidth}
    \centering
   \includegraphics[scale = 0.45]{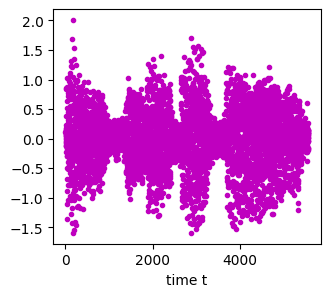}
    \caption{Denoised audio signal (NN)}
    \end{subfigure}
\begin{subfigure}[b]{0.3\textwidth}
    \centering
   \includegraphics[scale = 0.45]{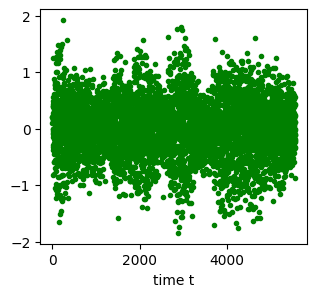}
    \caption{Denoised audio signal (RF)}
    \end{subfigure}

   \caption{Results using (a) DRFM; (b) NN; (c) random feature model (RF). }
    \label{fig:denoisedaud}
\end{figure}

\section{Conclusion}\label{sec:conc}
In this paper, We proposed a diffusion random feature model. It was shown that with our network architecture, the model becomes interpretable helping us to find theoretical upper bounds on the samples generated by DRFM with respect to the input distribution. We validated our findings using numerical experiments on audio and a small subset of the fashion MNIST dataset. Our findings indicated the power of our method to learn the process of generating data from as few as one hundred training samples and one hundred timesteps. Comparisons with a fully connected network (when all layers are trainable) and random features method (all but the last layer is fixed i.e., only $\boldsymbol{\theta}_2$ is trainable) highlighted the advantages of our model which performed better than both. 

Our proposed model combining the idea of random features with diffusion models opens up potential directions worthy of further exploration into the power of random features in developing models which are interpretable and suited to complex tasks. Further direction of this work involves extending DRFM beyond its shallow nature into deeper architecture to avoid the curse of dimensionality with respect to the number of features required for approximation.

\section*{Acknowledgements}
E.S. and G.T. were supported in part by NSERC RGPIN 50503-10842.

\medskip

\bibliographystyle{abbrv}
\bibliography{references}{}

\begin{thebibliography}{10}

\bibitem{block2020generative}
A.~Block, Y.~Mroueh, and A.~Rakhlin.
\newblock Generative modeling with denoising auto-encoders and {L}angevin sampling.
\newblock {\em arXiv preprint arXiv:2002.00107}, 2020.

\bibitem{chen2022sampling}
S.~Chen, S.~Chewi, J.~Li, Y.~Li, A.~Salim, and A.~R. Zhang.
\newblock Sampling is as easy as learning the score: theory for diffusion models with minimal data assumptions.
\newblock {\em arXiv preprint arXiv:2209.11215}, 2022.

\bibitem{choromanski2021hybrid}
K.~Choromanski, H.~Chen, H.~Lin, Y.~Ma, A.~Sehanobish, D.~Jain, M.~S. Ryoo, J.~Varley, A.~Zeng, V.~Likhosherstov, et~al.
\newblock Hybrid random features.
\newblock {\em arXiv preprint arXiv:2110.04367}, 2021.

\bibitem{performer}
K.~Choromanski, V.~Likhosherstov, D.~Dohan, X.~Song, A.~Gane, T.~Sarlos, P.~Hawkins, J.~Davis, A.~Mohiuddin, L.~Kaiser, et~al.
\newblock Rethinking attention with {P}erformers.
\newblock {\em arXiv preprint arXiv:2009.14794}, 2020.

\bibitem{dinh2017density}
L.~Dinh, J.~Sohl-Dickstein, and S.~Bengio.
\newblock Density estimation using real {NVP}.
\newblock In {\em International Conference on Learning Representations}, 2017.

\bibitem{goodfellow2014generative}
I.~Goodfellow, J.~Pouget-Abadie, M.~Mirza, B.~Xu, D.~Warde-Farley, S.~Ozair, A.~Courville, and Y.~Bengio.
\newblock Generative adversarial nets.
\newblock {\em Advances in neural information processing systems}, 27, 2014.

\bibitem{ho2019flow++}
J.~Ho, X.~Chen, A.~Srinivas, Y.~Duan, and P.~Abbeel.
\newblock Flow++: Improving flow-based generative models with variational dequantization and architecture design.
\newblock In {\em International Conference on Machine Learning}, pages 2722--2730. PMLR, 2019.

\bibitem{ddpm}
J.~Ho, A.~Jain, and P.~Abbeel.
\newblock Denoising diffusion probabilistic models.
\newblock {\em Advances in Neural Information Processing Systems}, 33:6840--6851, 2020.

\bibitem{hoogeboom2021autoregressive}
E.~Hoogeboom, A.~A. Gritsenko, J.~Bastings, B.~Poole, R.~v.~d. Berg, and T.~Salimans.
\newblock Autoregressive diffusion models.
\newblock {\em arXiv preprint arXiv:2110.02037}, 2021.

\bibitem{sde1}
A.~Jolicoeur-Martineau, K.~Li, R.~Pich{\'e}-Taillefer, T.~Kachman, and I.~Mitliagkas.
\newblock Gotta go fast when generating data with score-based models.
\newblock {\em arXiv preprint arXiv:2105.14080}, 2021.

\bibitem{lm2}
A.~Jolicoeur-Martineau, R.~Pich{\'e}-Taillefer, R.~T.~d. Combes, and I.~Mitliagkas.
\newblock Adversarial score matching and improved sampling for image generation.
\newblock {\em arXiv preprint arXiv:2009.05475}, 2020.

\bibitem{kawaguchi2018deep}
K.~Kawaguchi, B.~Xie, and L.~Song.
\newblock Deep semi-random features for nonlinear function approximation.
\newblock In {\em Proceedings of the AAAI Conference on Artificial Intelligence}, volume~32, 2018.

\bibitem{kingma2021variational}
D.~Kingma, T.~Salimans, B.~Poole, and J.~Ho.
\newblock Variational diffusion models.
\newblock {\em Advances in neural information processing systems}, 34:21696--21707, 2021.

\bibitem{gan4}
D.~P. Kingma and M.~Welling.
\newblock Auto-encoding variational {B}ayes.
\newblock {\em arXiv preprint arXiv:1312.6114}, 2013.

\bibitem{kingma2019introduction}
D.~P. Kingma, M.~Welling, et~al.
\newblock An introduction to variational autoencoders.
\newblock {\em Foundations and Trends{\textregistered} in Machine Learning}, 12(4):307--392, 2019.

\bibitem{koller2009probabilistic}
D.~Koller and N.~Friedman.
\newblock {\em Probabilistic graphical models: principles and techniques}.
\newblock MIT press, 2009.

\bibitem{li2023fast}
J.~Li, H.~Cao, J.~Wang, F.~Liu, Q.~Dou, G.~Chen, and P.-A. Heng.
\newblock Fast non-markovian diffusion model for weakly supervised anomaly detection in brain mr images.
\newblock In {\em International Conference on Medical Image Computing and Computer-Assisted Intervention}, pages 579--589. Springer, 2023.

\bibitem{menick2018generating}
J.~Menick and N.~Kalchbrenner.
\newblock Generating high fidelity images with subscale pixel networks and multidimensional upscaling.
\newblock In {\em International Conference on Learning Representations}, 2019.

\bibitem{nelsen2021random}
N.~H. Nelsen and A.~M. Stuart.
\newblock The random feature model for input-output maps between banach spaces.
\newblock {\em SIAM Journal on Scientific Computing}, 43(5):A3212--A3243, 2021.

\bibitem{pu2016variational}
Y.~Pu, Z.~Gan, R.~Henao, X.~Yuan, C.~Li, A.~Stevens, and L.~Carin.
\newblock Variational autoencoder for deep learning of images, labels and captions.
\newblock {\em Advances in neural information processing systems}, 29, 2016.

\bibitem{rahimi2007random}
A.~Rahimi and B.~Recht.
\newblock Random features for large-scale kernel machines.
\newblock {\em Advances in neural information processing systems}, 20, 2007.

\bibitem{rahimi2008uniform}
A.~Rahimi and B.~Recht.
\newblock Uniform approximation of functions with random bases.
\newblock In {\em 2008 46th annual allerton conference on communication, control, and computing}, pages 555--561. IEEE, 2008.

\bibitem{rahimi2008weighted}
A.~Rahimi and B.~Recht.
\newblock Weighted sums of random kitchen sinks: Replacing minimization with randomization in learning.
\newblock {\em Advances in neural information processing systems}, 21, 2008.

\bibitem{richardson2022srmd}
N.~Richardson, H.~Schaeffer, and G.~Tran.
\newblock {SRMD}: Sparse random mode decomposition.
\newblock {\em arXiv preprint arXiv:2204.06108}, 2022.

\bibitem{spade4}
E.~Saha, L.~S.~T. Ho, and G.~Tran.
\newblock {SPADE}4: Sparsity and delay embedding based forecasting of epidemics.
\newblock {\em Bulletin of Mathematical Biology}, 85(8):71, 2023.

\bibitem{saha2022harfe}
E.~Saha, H.~Schaeffer, and G.~Tran.
\newblock {HARFE}: Hard-ridge random feature expansion.
\newblock {\em arXiv preprint arXiv:2202.02877}, 2022.

\bibitem{sohl2015deep}
J.~Sohl-Dickstein, E.~Weiss, N.~Maheswaranathan, and S.~Ganguli.
\newblock Deep unsupervised learning using nonequilibrium thermodynamics.
\newblock In {\em International Conference on Machine Learning}, pages 2256--2265. PMLR, 2015.

\bibitem{song2019generative}
Y.~Song and S.~Ermon.
\newblock Generative modeling by estimating gradients of the data distribution.
\newblock {\em Advances in neural information processing systems}, 32, 2019.

\bibitem{song2020improved}
Y.~Song and S.~Ermon.
\newblock Improved techniques for training score-based generative models.
\newblock {\em Advances in neural information processing systems}, 33:12438--12448, 2020.

\bibitem{dsm3}
P.~Vincent.
\newblock A connection between score matching and denoising autoencoders.
\newblock {\em Neural computation}, 23(7):1661--1674, 2011.

\bibitem{xu2017variational}
W.~Xu, H.~Sun, C.~Deng, and Y.~Tan.
\newblock Variational autoencoder for semi-supervised text classification.
\newblock In {\em Proceedings of the AAAI Conference on Artificial Intelligence}, volume~31, 2017.

\bibitem{yang2022diffusion}
L.~Yang, Z.~Zhang, Y.~Song, S.~Hong, R.~Xu, Y.~Zhao, Y.~Shao, W.~Zhang, B.~Cui, and M.-H. Yang.
\newblock Diffusion models: A comprehensive survey of methods and applications.
\newblock {\em arXiv preprint arXiv:2209.00796}, 2022.

\bibitem{zhang2021diffusion}
Q.~Zhang and Y.~Chen.
\newblock Diffusion normalizing flow.
\newblock {\em Advances in Neural Information Processing Systems}, 34:16280--16291, 2021.

\end{thebibliography}

\end{document}